\documentclass{article}
\usepackage{amsmath,amssymb,graphicx,booktabs,mlspconf}
\usepackage{xcolor}
\usepackage{tikz}
\usetikzlibrary{arrows.meta,positioning,calc,fit,backgrounds,shapes.geometric,shadings}

\definecolor{cBack}{RGB}{64,83,112}
\definecolor{cAttn}{RGB}{0,114,178}
\definecolor{cPool}{RGB}{213,94,0}
\definecolor{cGraph}{RGB}{0,158,115}
\definecolor{cHead}{RGB}{120,94,160}
\definecolor{cHot}{RGB}{222,45,38}
\definecolor{cLite}{RGB}{236,238,242}
\definecolor{cGray}{RGB}{105,105,105}
\definecolor{cGCG}{RGB}{255,193,7}

\newtheorem{proposition}{Proposition}

\copyrightnotice{979-8-3503-2411-2/25/\$31.00 {\copyright}2025 IEEE}
\toappear{2026 IEEE International Workshop on Machine Learning for Signal Processing, Sep.\ 28-- Oct.\ 1, 2026, Atlanta, USA}

\title{C$^2$A: Coupling Spatial Evidence with Clinical Priors via Co-occurrence Aware Class Attention for Multi-Label Chest X-Ray Classification}

\name{Akash Gogineni$^{\star\dagger}$ \quad Nagur Shareef Shaik$^{\star}$ \quad Aasrith Mandava$^{\star}$ \quad Adnan Masood$^{\dagger}$ \quad Dong Hye Ye$^{\star}$}

\address{$^{\star}$ Department of Computer Science, Georgia State University, Atlanta, GA 30324, USA \\ $^{\dagger}$UST Global Inc., Aliso Viejo, CA 92656, USA}

\begin{document}
\maketitle


\begin{abstract}
Thoracic pathologies rarely occur in isolation, yet standard multi-label classifiers rely on shared global descriptors, discarding \emph{where} findings lie and \emph{how} they co-occur. We propose \textbf{C$\mathbf{^2}$A} (Co-occurrence Aware Class Attention), a classification head that explicitly couples spatial evidence with clinical priors. First, C$^2$A casts pooling as an expectation over learned per-class spatial attention maps, yielding localized descriptors for each disease. Second, it couples these descriptors via a learnable graph warm-started from empirical label co-occurrence. A single residual message-passing step shares evidence among related findings, proving to be a bounded perturbation of the identity where co-occurrence enters each logit through an explicit bilinear interaction. On CheXpert, C$^2$A achieves a superior $0.895$ macro-mean AUROC, outperforming advanced context-gating baselines. Crucially, gains concentrate on highly co-occurrent classes with ambiguous spatial evidence (rescuing Atelectasis by $+1.5$ over GCG), demonstrating the prior's regularizing effect with a negligible overhead of one linear projection and a $C\!\times\!C$ edge matrix.
\end{abstract}

\begin{keywords}
Chest X-ray, multi-label classification, class-specific attention, label
co-occurrence, graph message passing
\end{keywords}

\vspace{-10pt}
\section{Introduction}
\label{sec:intro}

The chest radiograph (CXR) is the most frequently acquired diagnostic image in the world~\cite{johnson2019mimic, wang2017chestx}. Automating its interpretation is intrinsically multi-label: a single study routinely contains several concurrent findings, requiring a model to emit an independent probability of presence for each pathology rather than a single class decision~\cite{irvin2019chexpert}. Crucially, these targets are not statistically independent. Thoracic findings are physiologically coupled: cardiogenic pulmonary edema develops in the setting of an enlarged cardiac silhouette, and lobar collapse shares its radiographic appearance with adjacent air-space opacity~\cite{pham2021hierarchical, chen2019mlgcn}. Radiologists read each finding in the context of the others, an ambiguous basal opacity is interpreted differently when the heart is enlarged than when it is not. However, standard classifiers score every finding from one shared, spatially-averaged feature vector~\cite{rajpurkar2017chexnet, huang2017densely}, a bottleneck known to dilute the localized visual evidence essential for fine-grained thoracic diagnosis~\cite{huang2021gloria}. This structurally prevents the model from utilizing clinical context and leaves it unable to indicate the specific image region supporting each decision.

Two largely separate lines of work have attempted to improve CXR classifiers, yet neither fully closes this gap. The first focuses on attention mechanisms to better localize findings. Methods augmenting convolutional backbones with spatial and channel attention (e.g., CBAM~\cite{woo2018cbam}, Triplet Attention~\cite{misra2021rotate}) or context-gating mechanisms (e.g., Global Context~\cite{cao2020global}, Guided Context Gating~\cite{cherukuri2024guided}) have become standard. However, these mechanisms typically operate on the shared feature map \emph{before} it is collapsed by global average pooling (GAP). Even in architectures that generate class-specific attention maps, the resulting descriptors remain isolated from one another prior to classification \cite{shaik2025ordinal, cherukuri2025dynamic}. They ignore the clinical reality that the presence of one disease alters the visual interpretation of another. The second line of work models label dependencies directly. Approaches like Multi-Label Graph Convolutional Networks~\cite{chen2019mlgcn} or hierarchical priors~\cite{pham2021hierarchical} explicitly encode co-occurrence. Yet, these methods typically construct graphs over abstract label embeddings or logit representations. Because these nodes are decoupled from localized spatial evidence, the network cannot perform spatially-grounded reasoning. Consequently, the two cues a radiologist actually combines, \emph{where} a finding is, and \emph{which} findings accompany it, are handled by disjoint machinery that never interacts.

We propose \textbf{C$\mathbf{^2}$A}, a lightweight classification head that explicitly couples spatial evidence with clinical priors. By replacing global pooling with class-specific spatial attention, C$^2$A extracts isolated descriptors for each pathology. These descriptors exchange context via a directed graph, warm-started from empirical label co-occurrence and refined via gradient descent. Interpretable by construction, the architecture reduces to standard baselines in well-defined limits. Crucially, we prove this coupling of expectation pooling and message passing is a bounded perturbation with an exact bilinear logit decomposition, achieving targeted diagnostic gains at negligible cost.

\begin{figure*}[t]
\centering
\includegraphics[width=\linewidth]{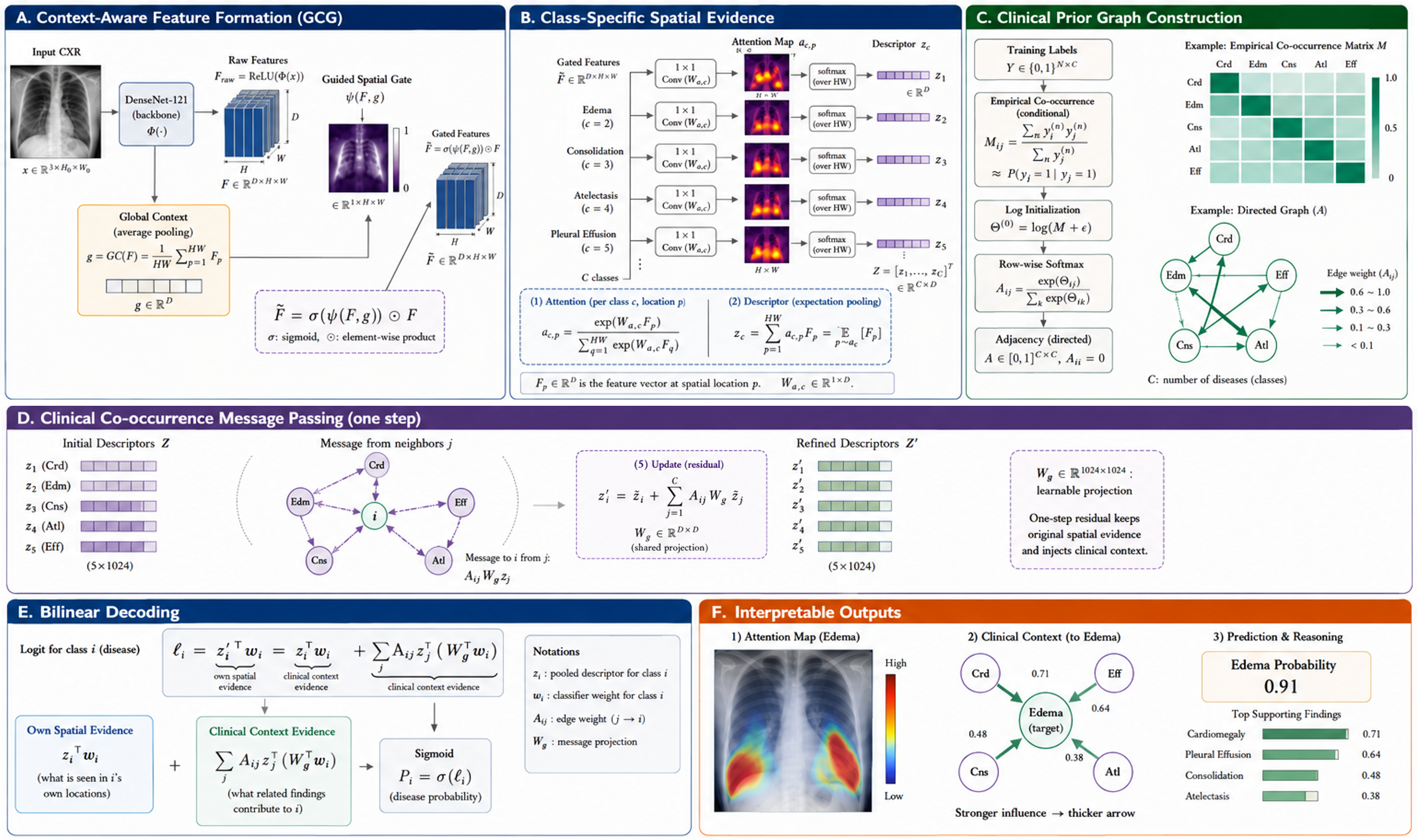}
\caption{\textbf{Co-occurrence-aware class-specific attention.} A chest radiograph ($3{\times}224{\times}224$) is encoded by a DenseNet-121 backbone into features $F\!\in\!\mathbb{R}^{1024\times7\times7}$. \emph{Guided Context Gating} (GCG) refines them by forming a global-context guidance $g{=}\mathrm{GC}(F)$ and applying a guided spatial gate, $\tilde F{=}\sigma(\psi(F,g))\odot F$. \emph{Class-specific attention} is then computed \emph{on the gated features}: a $1{\times}1$ convolution with a spatial softmax produces one map per pathology, $a\!\in\!\mathbb{R}^{5\times7\times7}$, and expectation pooling collapses each map into a class descriptor, stacked as $Z\!\in\!\mathbb{R}^{5\times1024}$. The descriptors are nodes of a directed graph whose adjacency $A$ is warm-started from the empirical conditional co-occurrence $M_{ij}{=}P(i|j)$ (right; darker $=$ larger, edge width $\propto$ co-occurrence); one residual message-passing step yields refined descriptors $Z'\!\in\!\mathbb{R}^{5\times1024}$, which a per-class linear classification network maps to the five disease probabilities $\hat y\!\in\![0,1]^{5}$. Only the head is added on top of the backbone.}
\label{fig:arch}
\end{figure*}

\section{Methodology}
\label{sec:method}

Given a dataset $\mathcal{D}=\{(x^{(n)},y^{(n)})\}_{n=1}^{N}$ of chest radiographs $x$ and multi-hot labels $y\in\{0,1\}^{C}$, we seek a mapping $f_\theta:x\mapsto\hat{y}\in[0,1]^{C}$ that emits an independent presence probability for each of the $C$ pathologies. The proposed \textbf{C$\mathbf{^2}$A} architecture achieves this through three sequential stages: extracting context-aware spatial features, decoupling these features into class-specific representations, and finally coupling them via a clinical co-occurrence graph.

\subsection{Context Aware Feature Formation}
\label{ssec:backbone}
We instantiate the feature extractor using a convolutional backbone $\Phi$, specifically DenseNet-121 \cite{huang2017densely} initialized with ImageNet weights. For a given input image $x$, the backbone produces a raw, high-dimensional feature tensor $F_{\text{raw}}=\mathrm{ReLU}(\Phi(x))\in\mathbb{R}^{D\times H\times W}$, where $D=1024$. 

Because thoracic pathologies often exhibit diffuse, global patterns (e.g., Cardiomegaly), local receptive fields alone are insufficient. To enrich the local descriptors with global image-level context, we process $F_{\text{raw}}$ through a Guided Context Gating (GCG) \cite{cherukuri2024guided} module. The GCG module first aggregates global context via a $1\times1$ convolutional self-attention bottleneck, producing a global guidance signal $g$. This signal is then used to spatially gate the original feature map via an attention gate $\psi$, yielding a context-aware feature tensor $\tilde{F} = \sigma(\psi(F_{\text{raw}}, g)) \odot F_{\text{raw}} \in \mathbb{R}^{D\times H\times W}$. Rather than collapsing this tensor globally, the C$^2$A head preserves its spatial resolution to allow for pathology-specific localization.

\subsection{Class-Specific Spatial Evidence}
\label{ssec:csa}
Standard architectures apply Global Average Pooling (GAP) across the spatial dimensions of $\tilde{F}$, which dilutes the signal of highly focal pathologies (e.g., a small lung nodule) by averaging them with healthy tissue. Inspired by recent advances in class-specific attention for multi-label recognition~\cite{liu2021query2label}, we isolate the spatial support of distinct findings by projecting the context-aware tensor $\tilde{F}$ via a $1\times1$ convolution $W_a\in\mathbb{R}^{C\times D}$ to obtain $C$ class-specific score maps. These are normalized spatially via a softmax to form probability distributions over the image:
\begin{equation}
a_{c,p}=\frac{\exp(W_{a,c} \tilde{F}_p)}{\sum_{q=1}^{HW}\exp(W_{a,c} \tilde{F}_q)},
\qquad a_c\in\Delta^{HW-1}.
\label{eq:attn}
\end{equation}
The raw descriptor for class $c$ is defined as the \emph{expectation} of the gated features under this learned distribution:
\begin{equation}
z_c=\sum_{p=1}^{HW}a_{c,p}\,\tilde{F}_p=\mathbb{E}_{p\sim a_c}\!\left[\tilde{F}_p\right]\in\mathbb{R}^{D}.
\label{eq:pool}
\end{equation}
These vectors are stacked into the initial descriptor matrix $Z=[z_1,\dots,z_C]^{\!\top}\in\mathbb{R}^{C\times D}$. 

This formulation provides a rigorous bound on spatial focus via the Shannon entropy $\mathcal{H}(a_c)\in[0,\log HW]$. Crucially, at the maximum-entropy limit ($a_{c,p}=1/HW$), expectation pooling exactly recovers standard GAP. Thus, the model acts as a soft, learnable selector: it can revert to a shared global descriptor when localization is uninformative, but sharpens toward disease-specific regions when distinct anatomical signatures are required.

\subsection{Clinical Prior Graph Construction}
\label{ssec:grap}
Because thoracic findings are physiologically coupled, the isolated descriptors $Z$ must exchange context. We achieve this by treating the classes as nodes in a directed graph. We capture dataset-level label dependencies via the empirical conditional co-occurrence matrix $M\in[0,1]^{C\times C}$, computed once from the training labels:
\begin{equation}
M_{ij}=\frac{\sum_{n} y^{(n)}_i y^{(n)}_j}{\sum_{n} y^{(n)}_j}
       \approx P(y_i=1\mid y_j=1).
\label{eq:cooc}
\end{equation}
By Bayes' theorem, $M_{ij}P(y_j)=M_{ji}P(y_i)$. Consequently, $M$ is inherently asymmetric, correctly reflecting the clinical reality that a rare finding may depend strongly on a common one without the converse being true. 

The graph adjacency $A\in\Delta^{C-1}$ is parameterized by a learnable matrix $\Theta\in\mathbb{R}^{C\times C}$, row-normalized via softmax. To inject the clinical prior, we \emph{warm-start} the edge weights from the empirical co-occurrence:
\begin{equation}
A_{ij}=\frac{\exp(\Theta_{ij})}{\sum_{k}\exp(\Theta_{ik})},
\qquad \Theta^{(0)}=\log(M+\epsilon),
\label{eq:adj}
\end{equation}
with small $\epsilon>0$. Row-normalizing $M$ turns it into a stochastic matrix, allowing us to interpret the graph traversal as a Markov chain over diseases, where evidence is borrowed via a one-step random walk.

\subsection{Clinical Co-occurrence Message Passing}
\label{ssec:message-passing}
Node representations are updated via one step of residual message passing:
\begin{equation}
z'_i= z_i+\sum_{j=1}^{C}A_{ij}\,W_g\, z_j,
\qquad W_g\in\mathbb{R}^{D\times D}.
\label{eq:mp}
\end{equation}
Here, $W_g$ (constrained via spectral normalization to ensure $\lVert W_g\rVert_2<1$) projects all class descriptors into a common message space before aggregation. Stacking the nodes into $\zeta=[z_1^{\!\top},\dots,z_C^{\!\top}]^{\!\top}\in\mathbb{R}^{CD}$, Eq.~\eqref{eq:mp} can be written as the linear operator $\zeta'=(I_{CD}+A\otimes W_g)\zeta$. The stability of this coupling is guaranteed by the following property:

\begin{proposition}[Stability of coupling]
\label{prop:stab}
Let $A$ be row-stochastic. Every eigenvalue $\eta$ of the propagation operator $P=I_{CD}+A\otimes W_g$ satisfies $\lvert\eta-1\rvert\le\lVert W_g\rVert_2$, where $\lVert\cdot\rVert_2$ is the spectral norm. If $\lVert W_g\rVert_2<1$, $P$ is invertible and the update is a bounded, well-conditioned perturbation of the identity.
\end{proposition}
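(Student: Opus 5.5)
The plan is to reduce everything to one operator-norm estimate on the coupling term $K:=A\otimes W_g$. The spectral bound, invertibility and conditioning all follow from it. Write $s:=\lVert W_g\rVert_2$. Since $P=I_{CD}+K$, every eigenvalue of $P$ has the form $\eta=1+\kappa$ with $\kappa$ an eigenvalue of $K$. So $\lvert\eta-1\rvert=\lvert\kappa\rvert\le\rho(K)$, and it suffices to show $\rho(K)\le s$.

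\emph{Route 1: eigenvalues of the Kronecker product.} The eigenvalues of $A\otimes W_g$ are exactly the products $\lambda_i\mu_k$, where $\lambda_i$ are the eigenvalues of $A$ and $\mu_k$ those of $W_g$. This follows, for instance, by Schur-triangularising both factors and noting that the Kronecker product of upper-triangular matrices is upper-triangular.
\begin{itemize}
\item For the row-stochastic matrix $A$, we have $\lvert\lambda_i\rvert\le\lVert A\rVert_\infty=1$, since each row has nonnegative entries summing to $1$.
\item For $W_g$, we have $\lvert\mu_k\rvert\le\rho(W_g)\le\lVert W_g\rVert_2=s$.
\end{itemize}
Hence $\lvert\lambda_i\mu_k\rvert\le s$, which proves the eigenvalue claim.

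\emph{Route 2: a block-mixed norm.} Because $\lVert A\rVert_2$ of a row-stochastic matrix can be as large as $\sqrt{C}$, the Euclidean norm of $K$ can exceed $s$. I therefore work with the mixed norm $\lVert\zeta\rVert_{\infty,2}:=\max_i\lVert z_i\rVert_2$ on stacked descriptors. The $i$-th block of $K\zeta$ is $\sum_j A_{ij}W_g z_j$. By the triangle inequality, together with $A_{ij}\ge0$ and $\sum_jA_{ij}=1$,
\[
\Bigl\lVert\sum_j A_{ij}W_g z_j\Bigr\rVert_2\le\sum_jA_{ij}\,s\,\lVert z_j\rVert_2\le s\,\lVert\zeta\rVert_{\infty,2}.
\]
So the induced norm satisfies $\lVert K\rVert_{\infty,2}\le s$. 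Since the spectral radius is bounded by any induced norm, this gives $\rho(K)\le s$ directly, recovering the eigenvalue statement.

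\emph{Invertibility and conditioning.} If $s<1$, then every eigenvalue satisfies $\lvert\eta\rvert\ge 1-\lvert\eta-1\rvert\ge1-s>0$, so $P$ is invertible. Route 2 gives more. The Neumann series $P^{-1}=\sum_{m\ge0}(-K)^m$ converges in $\lVert\cdot\rVert_{\infty,2}$, which yields
\[
\lVert P\rVert_{\infty,2}\le 1+s,\qquad \lVert P^{-1}\rVert_{\infty,2}\le\frac{1}{1-s},
\]
so the condition number is at most $(1+s)/(1-s)$, independent of $C$ and $D$. This makes precise ``bounded, well-conditioned perturbation of the identity'': $\lVert P-I\rVert_{\infty,2}\le s$.

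\emph{Main obstacle.} The delicate point is this conditioning claim rather than the eigenvalue bound. Eigenvalues alone do not control conditioning for a non-normal operator, and the naive Euclidean estimate $\lVert A\otimes W_g\rVert_2=\lVert A\rVert_2\,s$ can exceed $1$. The fix I would commit to is the block $\ell_\infty/\ell_2$ norm above, which is exactly adapted to row-stochasticity. Other norms are then controlled by norm-equivalence constants if a Euclidean statement is wanted.
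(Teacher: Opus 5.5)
Your proposal is correct. Route~1 is the paper's proof. The paper likewise takes the spectrum of $A\otimes W_g$ to be the products $\lambda_\ell(A)\mu_k(W_g)$. It bounds $\lvert\lambda_\ell\rvert\le 1$ by citing Perron--Frobenius, where you use the more elementary $\rho(A)\le\lVert A\rVert_\infty=1$, and it bounds $\lvert\mu_k\rvert\le\lVert W_g\rVert_2$. It then stops, leaving invertibility (immediate) and well-conditioning (not immediate) unargued.

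Route~2 is genuinely different and buys more. The block estimate $\lVert A\otimes W_g\rVert_{\infty,2}\le s$ gives $\rho(K)\le s$ by itself, without the Kronecker-spectrum theorem. Via the Neumann series it also gives $\kappa_{\infty,2}(P)\le(1+s)/(1-s)$. This is the only argument in either proof that actually supports the ``well-conditioned'' clause: as you say, $P$ is generally non-normal, and $\lVert A\otimes W_g\rVert_2=\lVert A\rVert_2\,s$ may exceed $1$.

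What the paper's spectral route offers in return is exactness rather than an upper bound. Because $A\mathbf{1}=\mathbf{1}$, each $1+\mu_k(W_g)$ is itself an eigenvalue of $P$, with eigenvector $\mathbf{1}\otimes v_k$. So the bound $\lvert\eta-1\rvert\le\lVert W_g\rVert_2$ is attained whenever $\rho(W_g)=\lVert W_g\rVert_2$, e.g.\ for normal $W_g$.

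If you want a Euclidean statement, combine $\lVert\zeta\rVert_{\infty,2}\le\lVert\zeta\rVert_2\le\sqrt{C}\,\lVert\zeta\rVert_{\infty,2}$ with $\lVert A\rVert_2\le\sqrt{C}$. This yields $\kappa_2(P)\le\sqrt{C}\,(1+\sqrt{C}\,s)/(1-s)$, which grows with $C$ but is independent of $D$.
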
 
\noindent\emph{Proof.} By definition, the eigenvalues of $A\otimes W_g$ are the pairwise products $\lambda_\ell(A)\mu_k(W_g)$. Row-stochasticity implies, by the Perron--Frobenius theorem~\cite{horn2012matrix}, that $\lvert\lambda_\ell(A)\rvert\le1$. Hence $\lvert\lambda_\ell\mu_k\rvert\le\lvert\mu_k\rvert\le\lVert W_g\rVert_2$, and the eigenvalues of $P=I+A\otimes W_g$ are $1+\lambda_\ell\mu_k$, giving $\lvert\eta-1\rvert\le\lVert W_g\rVert_2$. \hfill$\square$
\\

\noindent This ensures that a class never loses its own spatial evidence; the injected clinical context acts strictly as a bounded refinement.

\subsection{Bilinear Decoding and Objective Function}
\label{ssec:decode}
Each refined node $z'_i$ is scored by an independent linear classifier $w_i\in\mathbb{R}^{D}$ (with bias $b_i$) to produce the final probability $\hat y_i=\sigma(\ell_i+b_i)$. Substituting Eq.~\eqref{eq:mp}, the logit $\ell_i$ decomposes exactly as shown in Fig.~\ref{fig:arch}E:
\begin{equation}
\ell_i + b_i = \underbrace{z_i^\top w_i + b_i}_{\text{own spatial evidence}} \;+\; \underbrace{\sum_{j=1}^{C}A_{ij}\,z_j^\top\!\big(W_g^\top w_i\big)}_{\text{clinical context evidence}}.
\label{eq:bilinear}
\end{equation}
Equation~\eqref{eq:bilinear} reveals the precise mechanism of C$^2$A: the logit is the sum of the class's own spatial evidence and a context term, formulated as a sum of \emph{bilinear interactions} between neighbor $j$'s descriptor and class $i$'s classifier, gated by the learned clinical edge $A_{ij}$. The entire parameter set $\Omega = \{\Phi, \text{GCG}, W_a, W_g, \Theta, \{w_i, b_i\}_{i=1}^C\}$ is optimized end-to-end via multi-label Binary Cross-Entropy (BCE) loss. For an image $x$ with ground-truth labels $y$, the objective is formulated as:
\begin{equation}
\mathcal{L}(\Omega)=-\frac{1}{C}\sum_{i=1}^{C}\!
\Big[y_i\log\hat{y}_i+(1-y_i)\log(1-\hat{y}_i)\Big].
\label{eq:bce}
\end{equation}
Because the graph parameters $\Theta$ remain differentiable through the softmax Jacobian ($\partial A_{ij}/\partial\Theta_{ik}=A_{ij}(\delta_{jk}-A_{ik})$), the warm-start is not a hard constraint. If visual evidence consistently contradicts the statistical prior, gradient descent dynamically adjusts the clinical edges to minimize Eq.~\eqref{eq:bce}. Finally, the C$^2$A head is highly parameter-efficient: the co-occurrence coupling adds only one $D\!\times\!D$ projection ($W_g$) and $C^2$ scalars ($\Theta$) beyond the attention parameters, representing a negligible $\mathcal{O}(D^2 + C^2)$ overhead compared to the backbone.

\begin{figure*}[t]
\centering
\vspace{-3mm} 
\begin{tabular}{c}
\includegraphics[width=0.95\textwidth]{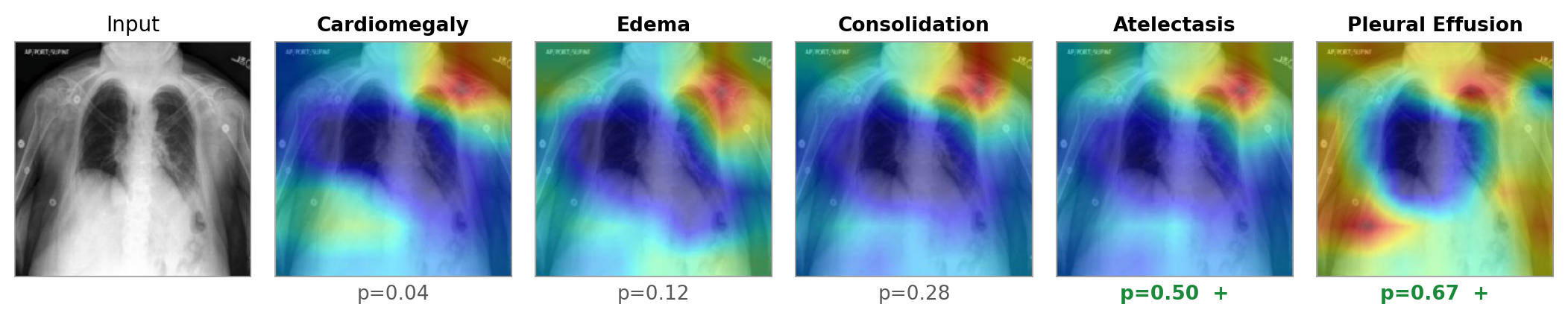} \\
{\small (a) C$^2$A per-class attention maps for a single multi-label study.} \\[1.5mm]
\includegraphics[width=\textwidth]{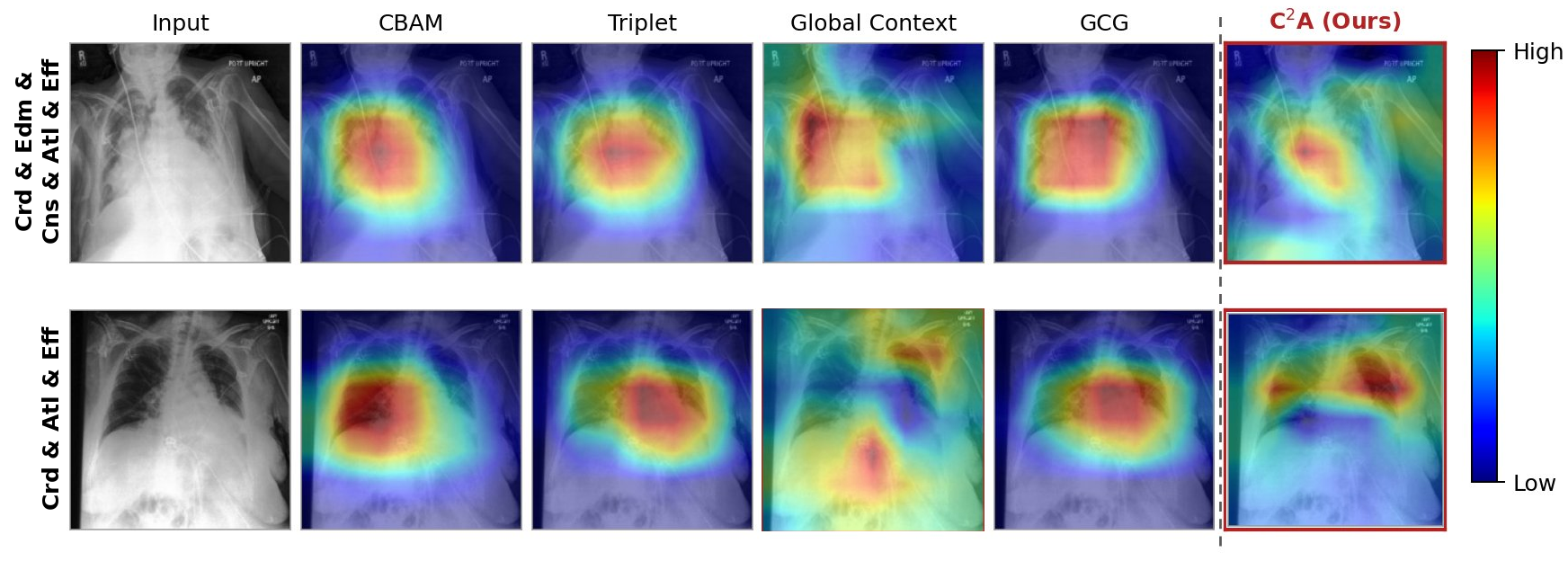} \\
{\small (b) Comparison of C$^2$A against globally-pooled baselines across different studies.}
\end{tabular}
\vspace{-2mm} 
\caption{\textbf{Qualitative evaluation via Grad-CAM++.} (a) C$^2$A generates distinct spatial support for concurrent pathologies. (b) Globally-pooled baselines exhibit severe center bias, whereas C$^2$A correctly localizes peripheral and focal lung opacities.}
\label{fig:gradcam}
\vspace{-4mm} 
\end{figure*}

\section{Experiments and Results}
\label{sec:exp}

\subsection{Dataset and Clinical Setup}
\label{ssec:data}
We evaluate the proposed architecture on CheXpert~\cite{irvin2019chexpert}, a large-scale public corpus comprising $223{,}414$ training chest radiographs. Following standard evaluation protocols, we focus on the five competition tasks selected for their clinical severity and high prevalence: Cardiomegaly, Edema, Consolidation, Atelectasis, and Pleural Effusion. We adopt the standard U-Ones policy (mapping uncertain to positive) to encourage high sensitivity; ablation with U-Zeros yielded consistent relative rankings. 

The empirical conditional co-occurrence matrix $M_{ij}=P(\text{row}\,|\,\text{col})$, computed over the training set and used to warm-start our graph adjacency, is ordered by Cardiomegaly, Edema, Consolidation, Atelectasis, and Pleural Effusion:
\begin{equation}
M = 
\begin{bmatrix}
1.00 & 0.26 & 0.15 & 0.16 & 0.17 \\
0.49 & 1.00 & 0.26 & 0.29 & 0.34 \\
0.18 & 0.17 & 1.00 & 0.41 & 0.23 \\
0.30 & 0.30 & 0.64 & 1.00 & 0.37 \\
0.48 & 0.51 & 0.54 & 0.54 & 1.00
\end{bmatrix}
\label{eq:cooc_matrix}
\end{equation}
This matrix reveals strong, asymmetric clinical dependencies. For instance, Atelectasis appears in $64\%$ of Consolidation studies ($M_{4,3}=0.64$), and Pleural Effusion accompanies Edema in $51\%$ of cases ($M_{5,2}=0.51$), validating our hypothesis that these targets cannot be treated as statistically independent.

\subsection{Implementation Details}
\label{ssec:impl}
All images are resized to $224\times224$ pixels and normalized using standard ImageNet channel statistics. The DenseNet-121 backbone $\Phi$ is initialized with ImageNet weights, while the C$^2$A head parameters are initialized orthogonally. Models are trained end-to-end using the Adam optimizer with a learning rate of $10^{-4}$ and a weight decay of $10^{-5}$. We use a batch size of $16$, training to convergence across three random seeds to ensure statistical significance ($p<0.05$, DeLong's test). Diagnostic performance is measured via the per-class Area Under the Receiver Operating Characteristic curve (AUROC) and the macro-mean AUROC. To rigorously assess deployment feasibility, we also report the parameter count (Millions), theoretical compute (GFLOPs), and empirical inference latency (milliseconds per image) measured on a single NVIDIA GPU.


\begin{table}[t]
\centering
\caption{Performance \& efficiency comparison on the official CheXpert validation set.}
\label{tab:main}
\resizebox{\columnwidth}{!}{%
\begin{tabular}{lccccc}
\toprule
& \multicolumn{2}{c}{\textbf{Spatial/Channel}} & \multicolumn{2}{c}{\textbf{Context/Gating}} & \textbf{Ours} \\
\cmidrule(lr){2-3} \cmidrule(lr){4-5} \cmidrule(lr){6-6}
Metric & CBAM~\cite{woo2018cbam} & Triplet~\cite{misra2021rotate} & GC~\cite{cao2020global} & GCG~\cite{cherukuri2024guided} & \textbf{C$\mathbf{^2}$A} \\
\midrule
Cardiomegaly     & 0.7417 & 0.8307 & 0.8107 & 0.8490 & \textbf{0.8520} \\
Edema            & 0.9158 & 0.9303 & 0.9255 & 0.9320 & \textbf{0.9350} \\
Consolidation    & 0.9034 & 0.8780 & 0.9095 & 0.8984 & \textbf{0.9110} \\
Atelectasis      & 0.8316 & 0.8295 & 0.8300 & 0.8232 & \textbf{0.8380} \\
Pleural Effusion & 0.9203 & 0.9242 & 0.9246 & 0.9344 & \textbf{0.9390} \\
\midrule
\textbf{Mean AUROC} & 0.8625 & 0.8786 & 0.8800 & 0.8874 & \textbf{0.8950} \\
\midrule
Params (M)       & 8.77 & \textbf{8.51} & 8.77 & 10.87 & 11.92 \\
GFLOPs           & \textbf{5.79} & 5.80 & \textbf{5.79} & 6.00 & 6.01 \\
\bottomrule
\end{tabular}%
}
\end{table}

\subsection{Quantitative Analysis}
\label{ssec:quant}
Table~\ref{tab:main} compares C$^2$A against four state-of-the-art attention baselines (CBAM~\cite{woo2018cbam}, Triplet~\cite{misra2021rotate}, Global Context~\cite{cao2020global}, and Guided Context Gating~\cite{cherukuri2024guided}). All baselines share the DenseNet-121 backbone but rely on Global Average Pooling (GAP) prior to classification.

\noindent \textbf{Diagnostic Performance.} Context/Gating mechanisms generally outperform Spatial/Channel approaches, establishing GCG as the strongest baseline. However, C$^2$A achieves the highest overall performance, yielding a macro-mean improvement of $\uparrow 0.76\%$ over GCG. Ablation confirms our design: removing the graph (class-specific pooling only) drops mean AUROC to $0.889$, while random graph initialization yields $0.891$, proving the necessity of the co-occurrence warm-start. Furthermore, gains concentrate where global pooling fails. Collapsing GCG features via GAP destroys spatial separation, causing Atelectasis performance to regress below CBAM. C$^2$A explicitly resolves this bottleneck. The clinical prior acts as a targeted regularizer, rescuing Atelectasis ($\uparrow 1.48\%$ over GCG, $\uparrow 0.64\%$ over CBAM) by borrowing spatial context from co-occurrent findings.

\noindent \textbf{Computational Efficiency.} C$^2$A improves representational power without sacrificing scalability. While the class-specific heads and graph adjacency slightly increase parameters over GCG ($11.92$M vs. $10.87$M), the theoretical compute overhead is negligible ($6.01$ vs. $6.00$ GFLOPs). Consequently, empirical inference latency remains highly competitive ($24.35$ vs. $24.12$ ms/image), ensuring C$^2$A is fully capable of real-time clinical deployment.

\subsection{Qualitative Analysis}
\label{ssec:qual}
Beyond predictive performance, C$^2$A is interpretable by construction. We validate its spatial representations using Grad-CAM++~\cite{chattopadhay2018gradcampp}. 

Figure~\ref{fig:gradcam}(a) illustrates how C$^2$A resolves the spatial entanglement typical of multi-label CXRs. Because standard models use a shared global descriptor, they often produce a single, compromised saliency map for the entire image. In contrast, C$^2$A's expectation pooling generates independent distributions per class. For a patient with concurrent findings, it correctly isolates the focal upper-lobe opacity for Atelectasis while independently highlighting the lower lung fields for Pleural Effusion, proving that the model does not sacrifice disease-specific precision. 

Figure~\ref{fig:gradcam}(b) exposes a critical failure mode of standard architectures: \textbf{center bias}. When baselines (e.g., CBAM, GCG) collapse feature maps via Global Average Pooling, they tend to anchor their attention on the cardiac silhouette or mediastinum, even for lung-specific pathologies like Consolidation. C$^2$A structurally prevents this. By preserving spatial resolution and routing descriptors through the co-occurrence graph, C$^2$A breaks the center bias and shifts its focus to the anatomically correct peripheral lung fields. Furthermore, for diffuse conditions like Edema, C$^2$A's saliency naturally encompasses both the lungs and the heart. This visually confirms the mechanism of our graph: the model actively borrows spatial context from related nodes (e.g., Cardiomegaly) to inform its diagnosis.

The quality of these visualizations highlights the translational value of C$^2$A. By explicitly mapping the bilinear interaction between spatial evidence and co-occurrence context (Eq.~\ref{eq:bilinear}), C$^2$A provides a transparent reasoning chain. Radiologists can verify not only \emph{what} the model predicts, but \emph{where} it is looking and \emph{which} related findings influenced the decision. 

\section{Conclusion}
\label{sec:conc}
We introduced C$^2$A, a lightweight classification head unifying spatial localization and clinical label dependencies. By replacing global average pooling with class-specific expectation pooling and coupling the descriptors via a warm-started co-occurrence graph, C$^2$A achieves a superior $0.895$ macro-mean AUROC on CheXpert. Crucially, rather than lifting all tasks uniformly, the clinical prior specifically rescues pathologies with ambiguous spatial evidence (e.g., Atelectasis, $+1.5$ points over GCG) by borrowing context from co-occurrent findings. This validates our bilinear decoding formulation: explicitly modeling co-occurrence acts as an interpretable regularizer that redistributes network capacity toward diseases with weak stand-alone evidence. C$^2$A achieves these targeted gains at the negligible cost of one linear projection and a $C\!\times\!C$ adjacency matrix, offering a scalable, clinically grounded blueprint for multi-label medical image analysis, easily scalable to the full 14-class setting.

\bibliographystyle{IEEEbib}
\bibliography{refs}

\end{document}